\documentclass{article}

\PassOptionsToPackage{numbers, compress}{natbib}

\usepackage[preprint]{neurips_2026}

\usepackage[utf8]{inputenc} 
\usepackage[T1]{fontenc}    
\usepackage{hyperref}       
\usepackage{url}            
\usepackage{booktabs}       
\usepackage{amsfonts}       
\usepackage{nicefrac}       
\usepackage{tabularx}
\usepackage{microtype}      
\usepackage{floatrow}
\usepackage{caption}

\usepackage[table]{xcolor}
\usepackage{booktabs}
\usepackage{makecell}
\usepackage{xspace}
\usepackage{subcaption}
\usepackage{algorithm}
\usepackage{algpseudocode}
\usepackage{threeparttable}
\usepackage{wrapfig}
\definecolor{lightyellow}{RGB}{255, 248, 220}
\usepackage{amsmath, amssymb, amsthm, amsfonts}
\usepackage{times}
\usepackage{graphicx}
\usepackage{enumitem}
\usepackage{cleveref}
\usepackage{titlesec}
\usepackage{mathrsfs}
\usepackage{physics}
\usepackage{booktabs}
\usepackage{tensor}
\usepackage{nicematrix}
\usepackage{thm-restate}

\titlespacing*{\section}{0pt}{6pt}{0pt}
\titlespacing*{\subsection}{0pt}{6pt}{0pt}

\usepackage{bm}

\newcommand{\W}{\bm{W}}
\newcommand{\bZ}{\bm{Z}}

\newcommand{\X}{\bm{X}}

\newcommand{\sg}{\mathsf{stopgrad}}
\newcommand{\down}{\mathrm{down}}

\newtheorem{proposition}{Proposition}
\newtheorem{assumption}{Assumption}

\theoremstyle{remark}
\newtheorem{remark}{Remark}

\ifdefined\usebigfont

\usepackage{times}
\usepackage[fontsize=13pt]{scrextend}
\AtBeginDocument{
	\newgeometry{letterpaper,left=1.56in,right=1.56in,top=1.71in,bottom=1.77in}
}
\else
\fi

\usepackage{makecell}
\definecolor{lightyellow}{RGB}{255, 248, 220}

\usepackage{booktabs,graphicx,multirow}
\usepackage{tikz}
\usetikzlibrary{positioning,arrows.meta,calc,bending,fit}
\definecolor{cFast}{HTML}{C0392B}
\definecolor{cSlow}{HTML}{4A4A4A}
\definecolor{cMute}{HTML}{7A7A7A}
\definecolor{cTok}{HTML}{EDEDED}
\definecolor{cTokHi}{HTML}{BFBFBF}

\newcommand{\contextstart}{c_1}
\newcommand{\contextend}{c_2}

\newcommand{\hgap}{c_{\rm gap}}
\newcommand{\zalign}{c_{\rm align}}
\newcommand{\querypos}{q_1}
\newcommand{\newquerypos}{q_2}
\newcommand{\lwindow}{w_{T}}
\newcommand{\swindow}{w_{S}}
\newcommand{\ttcd}{\textcolor{black}{TTCD}\xspace}
\newcommand{\ipttcd}{\textcolor{black}{IP-TTCD}\xspace}
\title{Learning What to Remember:\\ Test-Time Training via Context Distillation}

\author{%
Zixuan Wang$^{1,2}$\thanks{Equal Contribution. Code available at \url{https://github.com/dangxingyu/ttcd}.} \qquad Xingyu Dang$^1$\footnotemark[1] \qquad Rui-Jie Zhu$^{3}$\qquad Zixin Wen$^{4}$ \AND Hengyu Fu$^2$\qquad Wenhao Chai$^1$\qquad Jason D. Lee$^2$
\\
\\
$^1$Princeton University
\qquad
$^2$UC Berkeley \qquad
$^3$UC Santa Cruz \\
$^4$Carnegie Mellon University
}

\begin{document}

\maketitle

\begin{abstract}
Effective long-context modeling is not merely about retaining more of the past, but about preserving the information that may prove relevant later. Test-time training (TTT) is an appealing approach that performs online parameter updates for long-context modeling, yet existing TTT methods only optimize either reconstruction or online adaptation objectives without considering the future utility of retained information. In this work, we propose \textbf{T}est-\textbf{T}ime \textbf{C}ontext \textbf{D}istillation (\ttcd), a TTT framework that introduces a self-supervised objective for allocating limited memory capacity for future use. Specifically, \ttcd uses a long-window teacher to supervise the fast weights of a short-window student, where the hidden-state discrepancy between them offers a dense, self-supervised signal guiding the model to memorize the contextual information crucial for future token predictions. We focus on an in-place variant: In-Place TTCD (\ipttcd), which uses the existing MLP parameters as the fast weights. Experiments on long-context language modeling tasks show \ipttcd consistently outperforms DeltaNet, Gated DeltaNet, sliding-window attention, and TTT when pre-trained from scratch. Furthermore, \ipttcd allows pre-trained transformer models to adapt their parameters during inference through continual pre-training, gaining long-context capabilities with only a lightweight architectural augmentation. Our results position \ttcd as a step toward architectural continual learning.
\end{abstract}

\section{Introduction}

As language models are increasingly used for long-chain reasoning and agentic trajectories, long-context language modeling \citep{su2024roformer, dao2022flashattention, gao2025train} has become a central direction for scaling models beyond short, isolated inputs. Full attention \citep{vaswani2017attention, radford2018improving} preserves all previous context, enabling exact retrieval and utilization of the past information but incurring computation and memory costs that grow quadratically with context length.
For humans, in contrast, effective  `long-context' reasoning rarely requires the entire history:
when processing a codebase or a book, people decide which information is likely to matter for future predictions and compress the rest away. Therefore, efficient alternatives face a  harder problem than lossless retention: they must compress the past into a bounded state.

Modern recurrent architectures approach this challenge by replacing the growing key–value cache with a compact recurrent state \citep{schlag2021delta, gu2021efficiently, gu2023mamba, yang2023gated, guo2025log}, achieving constant cost per token. Recent variants, including DeltaNet~\citep{schlag2021delta}, Gated DeltaNet~\citep{yang2023gated} and KDA~\citep{team2025kimi}, 
have narrowed the gap to softmax attention while showing favorable scaling. Test-time training (TTT) \citep{sun2020test, sun2024learning} offers a 
complementary perspective by making a subset of the model's parameters 
trainable at inference time, so that the weights themselves act as 
contextual memory. Several recent methods \citep{zhang2025test, 
feng2026inplace, tandon2025end} demonstrate that this paradigm is 
beneficial and practical for long-context language modeling.

However, under a fixed-capacity bottleneck, these alternative models must effectively decide at every step what to write, preserve, or forget to achieve better performance in a longer context. Existing TTT objectives provide only partial solutions to this issue. Reconstruction-based objectives \cite{sun2024learning, behrouz2024titans, zhang2025test} ask the state to store all history uniformly, but do not prioritize the storage of predictive information compared to irrelevant detail. The next-token objectives \cite{feng2026inplace, tandon2025end} identify what helps the current prediction, but do not explicitly attribute that predictive computation to remote history rather than to the local context. Instead, we desire an objective that directly teaches which past information should be retained for potential future use. Therefore, we ask the following research question:
\begin{center}
    \textit{Can we design a TTT objective that compresses past history according to its utility for future predictions?}
\end{center}


We propose \ttcd (\textbf{T}est-\textbf{T}ime \textbf{C}ontext \textbf{D}istillation), a test-time training method based on context distillation \citep{snell2022learning, corallo2024finch, jha2024characterizing, eyuboglu2025cartridges, chari2025kv}. The context distillation objective captures the counterfactual discrepancy between two views of the same model with different context lengths. Specifically, we calculate the difference between \textit{a long-context self-teacher} with more remote history tokens and \textit{a short-window student} without the context when predicting the next token. The hidden-state difference indicate what the remote information contributes beyond the local context.
The self-supervised objective therefore enables \ttcd to update the fast weight, encouraging the limited parametric state to store distant information that the student would miss without the context, providing a dense representation-level signal of the essential past for potential future prediction.

\begin{figure}[t]
    \centering
    \begin{tikzpicture}[
        font=\footnotesize,
        encbox/.style={rounded corners=3pt, draw=cSlow, line width=0.55pt, fill=gray!9, align=center, inner sep=5pt, minimum height=0.95cm, minimum width=2.7cm},
        predbox/.style={rounded corners=3pt, draw=cFast!85, line width=0.75pt, fill=cFast!4, align=center, inner sep=5pt, minimum height=1.05cm, minimum width=3.45cm},
        lossbox/.style={rounded corners=2pt, draw=cFast!85, line width=1.0pt, fill=cFast!7, align=center, inner sep=5pt, minimum height=1.15cm, minimum width=5.2cm},
        tokhi/.style={draw=cSlow, line width=0.35pt, fill=cTok, minimum size=0.34cm, inner sep=0pt},
        tokbox/.style={draw=cSlow, line width=0.35pt, fill=cTokHi, minimum size=0.34cm, inner sep=0pt},
        arr/.style={-{Stealth[length=1.9mm]}, line width=0.6pt, draw=cSlow},
        sgarr/.style={-{Stealth[length=1.9mm]}, line width=0.6pt, dashed, draw=cMute},
        shareline/.style={line width=0.6pt, dashed, draw=cMute},
        fastdash/.style={-{Stealth[length=1.9mm]}, line width=0.6pt, dashed, draw=cFast!85},
        midtxt/.style={inner sep=2pt, fill=white, text=black},
        sharetxt/.style={inner sep=1.5pt, fill=white, text=cMute!50!black},
        fasttxt/.style={inner sep=1.5pt, fill=white, text=cFast!85!black},
        ctxlbl/.style={align=center, text=cSlow},
    ]
        \foreach \i in {0,...,4} {
            \node[tokhi] (xSc\i) at (-4.46+0.38*\i,0) {};
        }
        \node[fit=(xSc0)(xSc4), draw=none, inner sep=0pt] (xS) {};
        \node[ctxlbl, below=2pt of xS] (xSlbl)
            {Local context (recent tokens)};

        \foreach \i in {0,...,7} {
            \node[tokbox] (xTc\i) at (1.42+0.38*\i,0) {};
        }
        \foreach \i in {8,...,12} {
            \node[tokhi] (xTc\i) at (1.42+0.38*\i,0) {};
        }
        \node[fit=(xTc0)(xTc12), draw=none, inner sep=0pt] (xT) {};
        \node[ctxlbl, below=2pt of xT] (xTlbl)
            {Context of past histories (longer context)};

        \node[encbox] (eS) at (-3.7,1.65) {Student Model\\(short context window)};
        \node[encbox] (eT) at ( 3.7,1.65) {Teacher Model\\(long context window)};

        \draw[arr] (xS.north) -- (eS.south);
        \draw[arr] (xT.north) -- (eT.south);

        \draw[shareline] (eS.east) -- (eT.west)
            node[sharetxt, pos=0.5] {shared and frozen};

        \node[predbox] (pred) at (-3.7,3.85)
            {MLP Down-Projection\\\textcolor{cFast!85!black}{\textit{fast weight}} $\W_{\mathrm{down}}$};

        \node[lossbox] (loss) at (3.7,3.85) {
            $\mathcal{L}^{\mathrm{CD}}_t =
            \frac{1}{2}\big\|\sg(y_t^{(T)}) -
            f_{\W_{\mathrm{down}}}\!\left(h_t^{(S)}\right)\big\|_2^2$};

        \draw[arr] (eS.north) --
            node[midtxt, left=1pt] {$h_t^{(S)}$}
            (pred.south);

        \draw[sgarr] (eT.north) --
            node[midtxt, right=1pt] {$\sg(y_t^{(T)})$}
            (loss.south);

        \draw[fastdash] ([yshift=2.2pt]pred.east) --
            ([yshift=2.2pt]loss.west);

        \draw[fastdash] ([yshift=-2.2pt]loss.west) --
            node[fasttxt, below=1pt] {TTT update}
            ([yshift=-2.2pt]pred.east);

    \end{tikzpicture}
    \caption{
\textbf{Overview of \ttcd.} 
A shared and frozen backbone is run with two context lengths: a short-window student that observes only recent tokens and a long-window teacher that observes a longer history. 
\ttcd treats the discrepancy between the teacher representation and the student's projected hidden state as a context-distillation loss, and uses this loss to update the down-projection fast weight at test time. The predictive contribution of the self-teacher relative to a short-context student is distilled into fast weights, which persists after the evidence leaves the local window. The learned information can be retrieved by later related queries, benefiting future predictions.
}
    \label{fig:ttcd}
\end{figure}

We validate the effectiveness of \ttcd with extensive experiments on synthetic and long-context language modeling tasks across different model scales. When trained from scratch, \ttcd consistently outperforms sub-quadratic and TTT baselines including sliding-window attention, Gated DeltaNet, DeltaNet, and IP-TTT, achieving the best sliding-window perplexity over context lengths up to 32K tokens, and its gains increase with context length. At 760M parameters, \ttcd also more than doubles IP-TTT on the long context RULER benchmark with the gap widening as context length grows. Beyond from-scratch training, applying \ttcd as continual pre-training to SmolLM2-360M/1.7B and LLaMA-3.1-8B improves RULER performance at 32K and 64K, demonstrating drop-in extensibility for existing pre-trained transformer-based models. Our results show a further step forward to the continual learning architecture in language models.







\section{Preliminary and Related Work}

In this section, we introduce the notation, mechanisms and background used throughout the paper. We recall the Test-Time Training (TTT) framework as fast-weight memory, and describe context distillation as a supervision principle, which transfers information from a stronger teacher with more contextual information to a weaker student with a smaller window size.

\begin{figure}[t]
    \centering
    \begin{minipage}[t]{0.48\textwidth}
    \subcaption{Sliding Window PPL of 340M Model}
        \includegraphics[width=\linewidth]{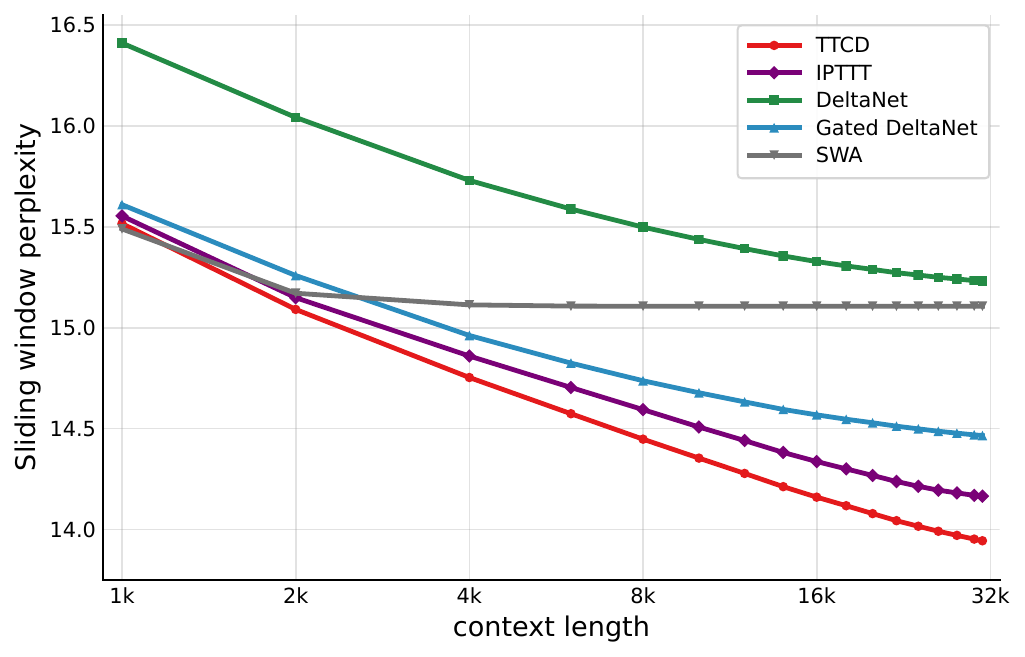}
    \end{minipage}
    \begin{minipage}[t]{0.48\textwidth}
        \subcaption{Sliding Window PPL of 760M Model}
        \includegraphics[width=\linewidth]{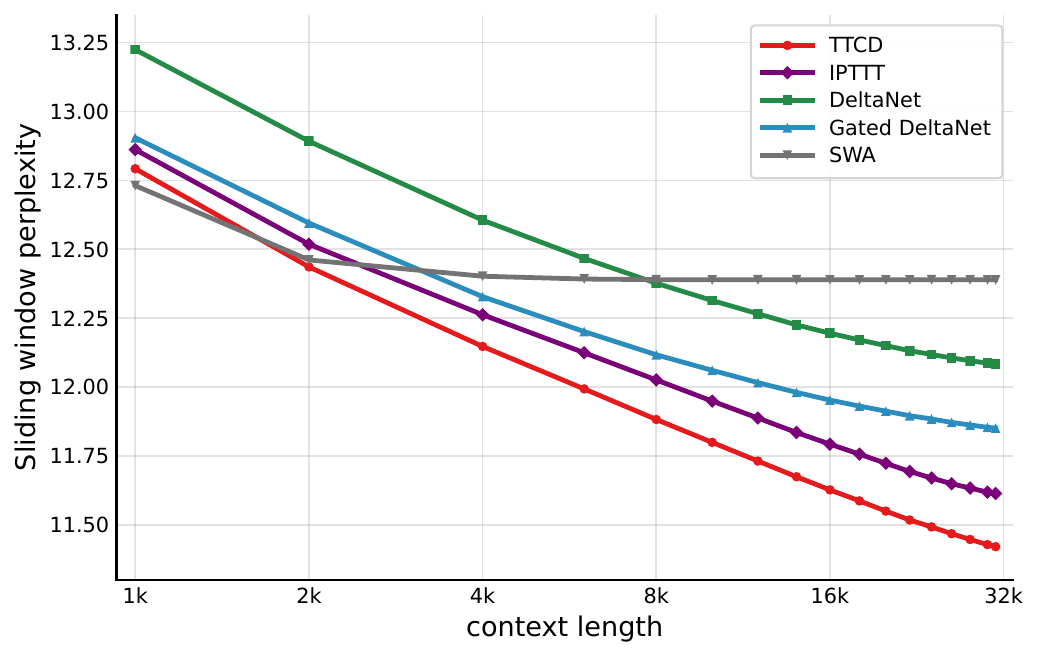}
    \end{minipage}
    \caption{Sliding window perplexity on the Books \citep{gao2020pile} at increasing context lengths. We show that \ipttcd consistently achieves lower perplexity than all the TTT-related baselines.}
    \label{fig:ppl_books}
\end{figure}

\subsection{Test-Time Training}

Recurrent neural networks have a long history in AI~\citep{hochreiter1997long, chung2014empirical, bai2018empirical}. Recent recurrent and test-time training architectures use fixed-size states or fast-weight updates as sub-quadratic alternatives to full attention~\citep{gu2021efficiently, fu2022hungry, peng2023rwkv, munkhdalai2024leave, gu2023mamba, dao2024transformers, yang2024parallelizing, yang2023gated}. DeltaNet and Gated DeltaNet use delta-rule key-value state updates~\citep{yang2024parallelizing,yang2023gated}; Mamba uses selective state-space recurrences~\citep{gu2023mamba}; and Log-Linear Attention and RATTENTION interpolate between linear and softmax attention to improve in-context recall~\citep{guo2025log,wang2025rattention}. While these models differ in their parameterization of the recurrent state, they share the view that long-context computation can be compressed into a compact dynamical memory rather than explicitly stored in the attention cache.

Test-time training (TTT) treats a set of sequence-specific \textit{fast weights} as the hidden state of a sequence model~\citep{ba2016using, schlag2021delta, sun2024learning, tandon2025end, feng2026inplace}. The fast weights are not frozen after training, but serve as dynamical states that encode the contextual information by consistently evolving under certain update rules.

Specifically, let $\theta$ denote the slow weights shared across sequences, and let $\W_t$ denote fast weights initialized at the beginning of each sequence and \(\mathcal{L}(\W,x)\) be the online loss function for input \(x\). Given an input sequence representation $(x_1,x_2,...,x_t)$ with current token $x_t$, the canonical TTT step consists of an \textbf{update} operation and a \textbf{apply} operation, where the update operation is computed by
\begin{equation}
    \label{eq:ttt-token-update}
    \W_{t} = \W_{t-1} - \eta \nabla_{\W} \mathcal L\bigl(\W_{t-1},x_t\bigr),
\end{equation}
thus $\W_t$ becomes a recurrent state over the time axis.  The \textbf{apply} operation \(o_t = f_{\W_t}(x_t)\) is done through a feature map \(f_{\W_t}(\cdot)\) associated with the fast weight \(\W_t\), which produces the final output. 

Under the TTT framework, many design choices are investigated, including different loss functions, optimizers and memory parameterizations \cite{yang2023gated, wang2025rattention, behrouz2024titans, sun2024learning}. Recently, there are two major competitive and efficient variants of TTT architectures\footnote{For computation efficiency, we only discuss the TTT variants without multi-layer backward propagation, e.g. TTT-E2E \cite{tandon2025end}, in this section.}: Large Chunk TTT (LaCT) \cite{zhang2025test} and In-Place TTT (IP-TTT) \cite{feng2026inplace}.

\paragraph{Large Chunk TTT.} For computation efficiency, practical TTT methods often replace per-token recurrence with mini-batch \cite{sun2024learning} or chunk-wise updates \cite{zhang2025test}. By partitioning the sequence into chunks of size $C$, and letting $\ell(\W,\cdot)$ denote the TTT loss at position $t$, the chunk update is
\begin{equation}
\label{eq:main}
\W_i = \W_{t-1} - \frac{\eta}{C}\sum_{t\in[(i-1)C, iC]}\nabla_{\W}\mathcal{L}(\W_{i-1}, x_t),
\end{equation}
for $i=1,\dots,T/C$ when sequence length $T$ is divisible by the chunk size $C$. In this form, the choice of $\ell(\cdot,\cdot)$ determines what information each chunk writes into the fast weights for later use.

\paragraph{In-Place TTT.} In-Place TTT adapts this framework to standard transformer language models by choosing existing model parameters as the fast weights~\citep{feng2026inplace}, shedding light on ``drop-in" designs for TTT without a fundamental architecture modification. In particular, In-Place TTT provided a bridge between \emph{fast} and \emph{slow} weights by reusing the native parameters (the down projection weights in the MLP layers), which is a fundamental step towards connecting temporary memory and long term memory. Our method follows this in-place formulation.

\subsection{Context Distillation and KV compression}
Context distillation \cite{snell2022learning} is a long-held method in language model post-training, aiming to internalize in-context knowledge into the model’s parameters. Specifically, context distillation tries to match the logit outputs or hidden states of a teacher model to those of a student model, where the teacher perceives more context than the student is allowed to see. Formally, let $h_t^{(T)}$ denote the teacher's latent representations and $h_t^{(S)}$ the student's. A standard hidden-state distillation objective is
\begin{equation}\label{eq-def:CD}
    \mathcal{L}_{t} = \big\|h_t^{(S)} -
    \sg\bigl(h_t^{(T)}\bigr) \big\|_2^2
\end{equation}
where $\textrm{stopgrad}(\cdot)$ detaches back-propagation of the teacher. Existing context-compression methods often use distillation to produce an offline prompt, memory, or compressed KV representation before generation~\citep{chuang2024learning, jiang2023llmlingua, yuan2025native, li2024snapkv, ge2023model, oren2024transformers, zhang2023h2o, lester2021power, li2021prefix}. Earlier work optimizes soft prompts for efficient adaptation~\citep{lester2021power, li2021prefix} or drops KV-cache entries to trade accuracy for efficiency~\citep{li2024snapkv, ge2023model, oren2024transformers, zhang2023h2o}. Newer long-context methods target million-token settings with more memory-efficient compression: Cartridges trains small document weights from synthetic QA pairs~\citep{eyuboglu2025cartridges} and performs offline, corpus-specific optimization to compress contexts; KV-Distill learns token dropping via full-cache distillation~\citep{chari2025kv}; Finch prunes caches using attention scores and calibration prompts~\citep{corallo2024finch}; and prior analyses find extractive compression competitive across tasks~\citep{jha2024characterizing}. These methods typically follow an offline two-stage pipeline, producing a fixed surrogate before generation and decoupling compression from the model’s current computation. In contrast, \ttcd is pretrained end-to-end to learn an online test-time update. These fast weights function as an evolving recurrent state, rather than as a fixed representation prepared before querying.

\section{Test-Time Context Distillation}
In this section, we introduce \textbf{T}est-\textbf{T}ime \textbf{C}ontext \textbf{D}istillation (\ttcd), which updates the model's fast weights using the context distillation objective. We first present our methodology on the context distillation training objective using shared weights teacher and student attentions (\Cref{subsec:method}), followed by a theoretical analysis on the benefit of our objective (\Cref{subsec: theory}). Finally, we introduce our practical implementation details for architectural design. 

\subsection{Methodology}
\label{subsec:method}

Prior TTT objectives often adopt a reconstruction objective to compress key-value pairs in the TTT fast weight \cite{sun2024learning, behrouz2024titans, zhang2025test} or an equivalent state space \citep{gu2023mamba,yang2023gated,yang2024gdn,yang2023gated}. Recent works \cite{feng2026inplace, tandon2025end} noticed the suboptimality of the reconstruction loss and proposed to use next-token prediction (NTP) loss. However, the two objectives face different drawbacks:
\begin{itemize}[leftmargin=2em]
    \item Reconstruction-type objectives often compress contexts uniformly, rather than allocating its capacity for more informative tokens. 
    \item NTP-type objectives often exhibit a recency bias, overemphasizing local signals near the prediction target.
\end{itemize}
Our idea is to combine context distillation with a forward-looking objective, encouraging the fast weights to retain the information most useful for \emph{future} predictions.

Following the prior works \cite{zhang2025test, feng2026inplace} which place parallel efficiency as top priority, we consider \emph{a chunk-wise formulation} of TTT as well. Let \(X=(x_1,\ldots,x_T)\) be processed in chunks \(\mathcal C_i\) of size \(C\). For each chunk, we run two causal sliding-window views of the same model. The teacher uses a long window of size \(w_T\), while the student uses a shorter window of size \(w_S<w_T\). Thus, at the same token position, the teacher can condition on earlier prefix tokens that are hidden from the student, and therefore contains predictive information that depends on remote history. Matching the student to the teacher turns this difference in the hidden states into a self-supervised memory target.

\paragraph{Hidden-state context distillation.} For a given layer and position \(t\), let \(h_t^{(T)}\) and \(h_t^{(S)}\) denote the pre-MLP hidden states produced by the teacher with window size \(w_T\), and student with window size $w_S$, and let \(y_t^{(T)} = \mathrm{MLP}(h_t^{(T)})\) and \(y_t^{(S)}=\mathrm{MLP}(h_t^{(S)})\) be the corresponding MLP outputs. We use the stop-gradient teacher output as the target for the student to match:
\begin{equation}\label{eq-def:chunk-CD}
    \mathcal{L}_t^{\mathrm{CD}} = \frac{1}{2}
    \|y_t^{(S)} - \sg\big(y_t^{(T)}\big)\|_2^2 ,\qquad 
    \mathcal{L}_{i}^{\mathrm{CD}} = \frac{1}{|\mathcal C_i|} \sum_{t\in\mathcal C_i}\mathcal{L}_t^{\mathrm{CD}}.
\end{equation}
where \(L_{i}^{\mathrm{CD}}\) is the chunk-wise aggregated  loss. Note that \ttcd performs \textbf{online} distillation that updates the model during test time, which is different from the traditional context distillation methods.


\paragraph{\ipttcd: Unifying slow and fast weights.} Throughout this paper, we focus on an instance of \ttcd: In-Place Test-Time Context Distillation (\ipttcd). It follows In-Place TTT \cite{feng2026inplace} that uses the MLP down-projection matrix as the fast weight for online adaptation. In a typical SwiGLU MLP layer, suppose the input hidden states are denoted by \(h_t^{(r)}\) at step $t$. The output \(y_t^{(r)}\) is computed by
\begin{align}
    y_t^{(r)} &\triangleq \mathrm{MLP} (h_t^{(r)}) \nonumber\\
    & \equiv \W_{\down} \Big( \W_{\mathrm{up}}h_t^{(r)} \odot \sigma(\W_{\mathrm{gate}}h_t^{(r)})\Big), \qquad r\in\{T,S\}. \label{eq-def:swiglu}
\end{align}
By considering the down projection weight $\W_{\down}$ as the online updated fast weight in chunk-wise TTT \cref{eq:main}, combined with hidden state context-distillation objective \cref{eq-def:chunk-CD}, we can obtain the update rule for \ipttcd. To be more specific, let's express the intermediate hidden states of MLP by
\begin{align*}
    z_t^{(r)}= \W_{\mathrm{up}}h_t^{(r)} \odot \sigma(\W_{\mathrm{gate}}h_t^{(r)}),\qquad y_t^{(r)}= \W_{\down}z_t^{(r)},
     \qquad t \in \mathcal{C}_i
\end{align*}
Let's write \(\bm{Z}_i^{(r)} = (z_t^{(r)})_{t \in \mathcal{C}_i}\) and \(\bm{Y}_i^{(r)} = (y_t^{(r)})_{t \in \mathcal{C}_i}\) to represent the chunks of \(y_t\) and \(z_t\). At the \(i\)-th chunk \(\mathcal C_i\), the current fast weight \(\W_{\down}^{(i-1)}\) is used to compute both teacher and student MLP outputs \(\bm{Y}_i^{(r)}, r \in \{T,S\}\). The negative gradient of the chunk loss with respect to the student fast weight is
\begin{align*}
    -\nabla_{\W_{\down}}\mathcal L_i^{\mathrm{CD}} 
    & = -\frac{1}{|\mathcal C_i|} \sum_{t\in\mathcal C_i}\nabla_{\W_{\down}}\frac{1}{2}\Big\|y_t^{(S)} -  \sg(y_t^{(T)}) \Big\|_2^2 \\
    & = -\frac{1}{|\mathcal C_i|}
    \Big[\bm{Y}_i^{(S)} - \sg(\bm{Y}_i^{(T)})\Big][\bZ_i^{(S)}]^\top
\end{align*}
The \ipttcd update for chunk \(\mathcal{C}_i\) is therefore
\begin{equation}
    \label{eq:inplace_update_sw}
    \W_{\down}^{(i)} = \W_{\down}^{(i-1)} - \frac{\eta_i}{|\mathcal C_i|}\Big[\bm{Y}_i^{(S)} - \sg(\bm{Y}_i^{(T)})\Big]
    [\bZ_i^{(S)}]^\top.
\end{equation}
The update direction is determined by the representation gap between long-window and short-window computation, while the write key is the student's own MLP activation. The teacher–student difference determines what information is written, while the student activation determines when similar information can be retrieved later. The updated down-projection matrix therefore acts as a sequence-specific memory rather than only as an adapted model parameter.


\subsection{Theoretical Analysis}
\label{subsec: theory}
After introducing the future utility intuition behind our method, the following analysis shows how in-place context distillation writes a long-context signal from an earlier query into the fast weights, which could be reusable at later related queries. For simplicity, we analyze a single final-layer MLP block. Following the chunked update in Equation~\eqref{eq:inplace_update_sw}, if position $n$ lies in chunk $j$, then all preceding chunks induce the fast-weight update
\begin{equation}
\Delta \W_{<j} = \eta \sum_{t \in \mathcal{I}_{<j}} \bigl(y_t^{(T)}-y_t^{(S)}\bigr)z_t^{(S)\top},
\label{eq:theory-update}
\end{equation}
where $\mathcal{I}_{<j}$ denotes the token positions in chunks strictly before chunk $j$. Constant factors from the squared distillation loss are absorbed into the learning rate $\eta$. Therefore, the student hidden state at position $n$ is shifted by
\begin{equation}
y_{n}^{(S)} \leftarrow y_{n}^{(S)}  + \Delta \W_{<j}z^{(S)}_n
\label{eq:theory-student-plus}
\end{equation}

Now consider a sequence containing an in-context learning episode,
\[
\X = [\cdots,\underbrace{x_{\contextstart},\ldots,x_{\contextend}}_{\text{context}},\cdots,x_{\querypos},\cdots,x_{\newquerypos}],
\]
where $(x_{\contextstart},\ldots,x_{\contextend})$ is the supporting context span, $\querypos$ is a representative answer-bearing position for the first query $q_1$, and $\newquerypos$ is the analogous position for a later related query $q_2$. The long-window teacher can use the context span to answer $q_1$, while the short-window student cannot.

\begin{assumption}\label{assump}
    We make the following two assumptions.
\begin{itemize}[leftmargin=2em]
    \item \textbf{Non-trivial in-context signal:} $\querypos-\lwindow < \contextstart \le \contextend \le \querypos-\swindow$ and $y_{\querypos}^{(T)\top}\bigl(y^{(T)}_{\querypos}-y^{(S)}_{\querypos}\bigr) \ge \hgap > 0$.
    \item \textbf{Chunk causality and related-query alignment:} if chunk $j$ contains $\newquerypos$, then $\querypos\in\mathcal{I}_{<j}$, $z^{(S)\top}_{\querypos}z^{(S)}_{\newquerypos} \ge \zalign > 0$, and $z_t^{(S)\top}z^{(S)}_{\newquerypos}=0$ for all $t \in \mathcal{I}_{<j}\setminus\{\querypos\}$.
\end{itemize}
\end{assumption}

The first assumption states that the teacher has a useful long-context signal at $\querypos$ that the student lacks. The second isolates a later query whose student activation reads the earlier signal from the fast weights. The following theorem guarantees the effective memorization of the distilled hidden state within the one-step update, making it possible for future utilization. Detailed proof see \Cref{appen_thm:main}.

\begin{proposition}[Context distillation transfers an in-context solution]
\label{thm:main}
    Under the setup and \Cref{assump}, for any learning rate $\eta>0$, the update induced by preceding chunks satisfies
\begin{equation}
\begin{aligned}
    {y}_{\querypos}^{(T)\top}\Delta {y}_{\newquerypos}^{(S)}
    \ge
    \eta\,\hgap\,\zalign.
\end{aligned}
\label{eq:theory-main}
\end{equation}
Equivalently, the chunk update moves the later student hidden state at $q_2$ toward the teacher hidden state that already encodes the in-context solution for $q_1$.
\end{proposition}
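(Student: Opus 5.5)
The plan is a direct computation from the update \eqref{eq:theory-update} and the read-out rule \eqref{eq:theory-student-plus}; no earlier result beyond these is needed. First, I identify the shift in the later student hidden state. Since $\newquerypos$ lies in chunk $j$, the preceding chunks induce $\Delta y^{(S)}_{\newquerypos} = \Delta \W_{<j} z^{(S)}_{\newquerypos}$. Substituting \eqref{eq:theory-update} gives
\begin{equation*}
\Delta y^{(S)}_{\newquerypos} = \eta \sum_{t\in\mathcal I_{<j}} \bigl(y_t^{(T)}-y_t^{(S)}\bigr)\, z_t^{(S)\top} z^{(S)}_{\newquerypos}.
\end{equation*}
This is a weighted sum of teacher--student gaps over earlier positions. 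The weights are the inner products of the write keys $z_t^{(S)}$ with the read key $z^{(S)}_{\newquerypos}$.

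Second, I collapse the sum using the related-query alignment part of \Cref{assump}. Chunk causality guarantees $\querypos\in\mathcal I_{<j}$. The orthogonality condition $z_t^{(S)\top}z^{(S)}_{\newquerypos}=0$ for all $t\in\mathcal I_{<j}\setminus\{\querypos\}$ kills every other term. What remains is
\begin{equation*}
\Delta y^{(S)}_{\newquerypos} = \eta \,\bigl(z^{(S)\top}_{\querypos} z^{(S)}_{\newquerypos}\bigr)\bigl(y^{(T)}_{\querypos}-y^{(S)}_{\querypos}\bigr).
\end{equation*}

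Third, I take the inner product with $y^{(T)}_{\querypos}$:
\begin{equation*}
y^{(T)\top}_{\querypos}\Delta y^{(S)}_{\newquerypos} = \eta\,\bigl(z^{(S)\top}_{\querypos} z^{(S)}_{\newquerypos}\bigr)\, y^{(T)\top}_{\querypos}\bigl(y^{(T)}_{\querypos}-y^{(S)}_{\querypos}\bigr).
\end{equation*}
Both scalar factors are positive and bounded below. The alignment gives $z^{(S)\top}_{\querypos} z^{(S)}_{\newquerypos}\ge \zalign>0$, and the non-trivial in-context signal gives $y^{(T)\top}_{\querypos}(y^{(T)}_{\querypos}-y^{(S)}_{\querypos})\ge\hgap>0$. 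Since $\eta>0$, multiplying the two lower bounds yields $\eta\,\hgap\,\zalign$, which is \eqref{eq:theory-main}.

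The window condition $\querypos-\lwindow<\contextstart\le\contextend\le\querypos-\swindow$ enters only as the interpretation of why $\hgap$ can be positive. It is not used in the algebra.

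The only subtle point is bookkeeping rather than an estimate. One must confirm that the vectors $y^{(T)}_t, y^{(S)}_t, z^{(S)}_t$ in \eqref{eq:theory-update} are exactly those in \Cref{assump}. In particular, the student activations must be evaluated consistently with the fast weights, i.e. with $\W_{\down}$ frozen at its pre-update value, as the analysis states. One must also check that $\querypos$ appears exactly once in $\mathcal I_{<j}$, so that no extra multiplicity enters.
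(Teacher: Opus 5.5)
Your proposal is correct and is essentially the paper's own proof. You substitute \eqref{eq:theory-update} into \eqref{eq:theory-student-plus}, use the orthogonality condition of \Cref{assump} to collapse the sum to the single $\querypos$ term, and multiply the two positive lower bounds $\hgap$ and $\zalign$; the paper takes the inner product with $y^{(T)}_{\querypos}$ before collapsing the sum rather than after, which makes no difference.
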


\begin{remark}
    \Cref{thm:main} shows that the distillation update stores the teacher-student residual from $q_1$ and replays it at an aligned later query $q_2$. Thus, fast weights can preserve a useful long-context computation after the supporting context leaves the student's attention window.
\end{remark}

\subsection{Practical Implementation}
\label{subsec:practical_implementation}
Based on the theoretical guarantees, we verify the feasibility of our method conceptually. To further improve the expressivity and efficiency of our method, we further consider certain practical implementations and add some lightweight components.

\paragraph{Off-policy update for context parallelism} Equation~\eqref{eq:inplace_update_sw} describes the ideal recurrent update, where each chunk is evaluated with the fast weight produced by all previous chunks. Directly materializing this recurrence limits parallelism, since the MLP outputs in chunk \(i\) depend on the fast weight produced by chunk \(i-1\). To improve parallelism over the context, our implementation uses an off-policy approximation: all teacher and student activations used to construct the update are computed with the base down-projection \(\W_{\down}^{(0)}\), while the resulting updates are accumulated causally.

\paragraph{Practical feature transformation} To further improve the performance of our model, we add some lightweight augmentation to the current TTCD update. We first adopt lightweight short causal convolutions \(\text{conv}_T\) and \(\text{conv}_S\) applied to teacher and student MLP activations, enhancing the model capability by introducing local mixing. For chunk \(\mathcal C_i\), let us recall that \(\bZ_i^{(r)}, r \in \{T,S\}\) are the intermediate activations of the SwiGLU MLP layers, we transform them by
\begin{equation}
    \bar{\bZ}_i^{(T)}=\text{conv}_T(\bZ_i^{(T)}),
    \qquad
    \bar{\bZ}_i^{(S)}=\text{conv}_S(\bZ_i^{(S)}).
\end{equation}
We then form a value residual and normalized features,
\begin{equation}
    \bm{R}_i
    =
    \W_{\down}^{(0)}\W_{\mathrm{proj}}
    (\bar{\bZ}_i^{(T)}-\bar{\bZ}_i^{(S)}),
    \qquad
    \bm{K}_i
    = \ell_2\text{-}\mathsf{norm}(\bar{\bZ}_i^{(S)}),
\end{equation}
where \(\W_{\mathrm{proj}}\) is a learned projection and \(\ell_2\text{-}\mathsf{norm}(\cdot)\) normalizes the student hidden states. The practical fast-weight correction is
\begin{equation}
    \Delta \W_{\down}^{(i)} = \beta_i \bm{R}_i \bm{K}_i^\top,
    \label{eq:fast-weight-update}
\end{equation}
followed by the causal parallel scan
\begin{equation}
    \W_{\down}^{(i)} = \W_{\down}^{(0)} + \sum_{1\leq j\leq i}\Delta \W_{\down}^{(j)}.
\end{equation}
This preserves the update structure of \Cref{eq:inplace_update_sw}: the residual \(\bm{R}_i\) specifies the discrepancy between the teacher and the student, while \(\bm{K}_i\) specifies where it should be written in the student activation.
\begin{table}[t]
\centering
\setlength{\tabcolsep}{3.2pt}
\resizebox{\linewidth}{!}{
\begin{tabular}{lcccc|ccccc|cccc}
\toprule
& \multicolumn{4}{c}{\textbf{NIAH (Avg)}} 
& \multicolumn{5}{c}{\textbf{Common Sense Reasoning}} 
& \multicolumn{4}{c}{\textbf{BABILong}} \\
\cmidrule(lr){2-5} \cmidrule(lr){6-10} \cmidrule(lr){11-14}
\textbf{Model} 
& \textbf{4K} & \textbf{8K} & \textbf{16K} & \textbf{32K}
& \textbf{HellaSwag} & \textbf{ARC-E} & \textbf{ARC-C} & \textbf{MMLU} & \textbf{Average}
& \textbf{4K} & \textbf{8K} & \textbf{16K} & \textbf{32K} \\
\midrule
DeltaNet 
& 41.67 & 38.25 & 24.83 & 19.33
& 45.81 & 48.61 & 28.67 & 26.44 & 37.38
& 16.99 & 8.85 & 3.90 & 1.94 \\
Gated DeltaNet 
& 32.21 & 29.88 & 22.29 & 18.62
& \textbf{48.15} & 54.76 & 29.44 & 25.25 & 39.40
& 19.24 & 11.93 & 6.08 & 2.84 \\
SWA 
& 44.88 & 22.58 & 10.79 & 4.21
& 47.32 & 52.53 & \textbf{30.72} & 25.74 & 39.08
& \textbf{25.42} & 15.45 & 7.91 & 4.40 \\
IP-TTT
& 59.83 & 33.88 & 23.92 & 9.29
& 47.26 & 53.45 & 29.35 & \textbf{26.89} & 39.24
& 21.26 & 14.87 & 7.59 & 4.46 \\
\ipttcd
& \textbf{65.79} & \textbf{46.92} & \textbf{41.75} & \textbf{21.96}
& 46.83 & \textbf{55.77} & 30.20 & 26.78 & \textbf{39.90}
& 23.32 & \textbf{16.69} & \textbf{10.37} & \textbf{6.73} \\
\bottomrule
\end{tabular}
}
\vspace{2pt}
\caption{Common-sense and long-context evaluation for 760M models. NIAH reports average RULER NIAH accuracy; Common Sense Avg is computed over HellaSwag, ARC-Easy, ARC-Challenge, and MMLU; BABILong reports average accuracy over five QA tasks.}
\label{tab:commonsense-niah-babilong-760m}
\end{table}

\begin{figure}[t]
\centering
\vspace{-0.5cm}
\begin{minipage}[t]{0.7\textwidth}
\vspace{0pt}
\centering
\subcaption{MAD Benchmark Results.}
\label{fig:mad}
\resizebox{\textwidth}{!}{%
{
\begin{tabular}{l|cccccc|c}
\toprule
 & \makecell{In-context\\Recall} & \makecell{Noisy\\Recall} & \makecell{Fuzzy\\Recall} & Compress & Memorize & \makecell{Selective\\Copy} & Overall \\
\midrule

DeltaNet & 99.9 & 86.7 & 25.1 & 42.3 & 55.5 & \textbf{100.0} & 67.8 \\
GDN      & 99.9 & 99.7 & 18.2 & 47.3 & 86.9 & 99.9 & 74.4 \\
LaCT     & \textbf{100.0} & \textbf{100.0} & 18.6 & 48.3 & 80.7 & 99.8 & 73.8 \\
IP-TTT    & 98.1 & 97.1 & 52.4 & 43.4 & 82.9 & 96.6 & 78.4 \\
\rowcolor{lightyellow}
\textbf{\ipttcd
}   & 99.3 & 98.8 & \textbf{56.2} & \textbf{50.0} & \textbf{88.9} & 99.2 & \textbf{82.1} \\
\bottomrule
\end{tabular}
}}
\end{minipage}
\hfill
\begin{minipage}[t]{0.28\textwidth}
\vspace{0pt}
\subcaption{760M S-NIAH-2}
\centering
\includegraphics[width=\textwidth]{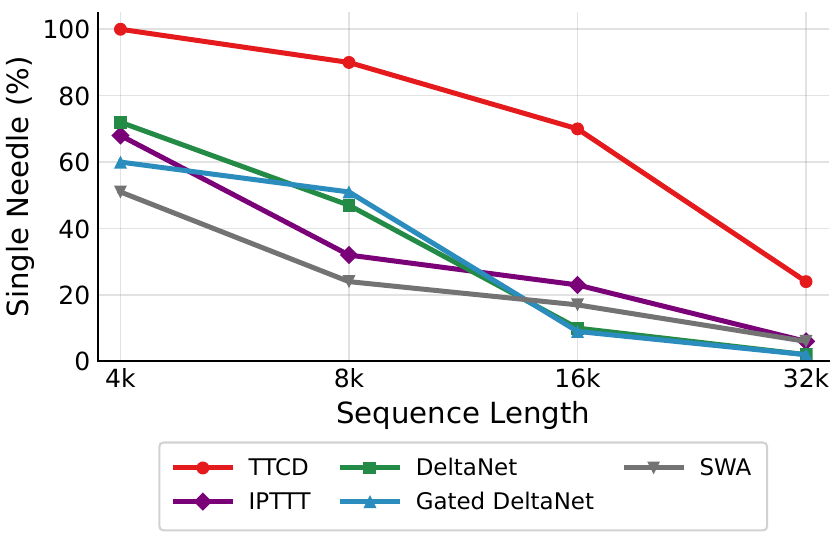}
\end{minipage}

\caption{Table \textbf{(a)} Results on the MAD benchmark~\citep{poli2024mechanistic}. \ipttcd attains the best overall score (82.1) and leads on the most memory-sensitive tasks (Fuzzy Recall, Compress, Memorize) that have not been saturated by prior baselines. Figure \textbf{(b)} Single-Needle-in-a-Haystack (S-NIAH-2) accuracy of 760M models from 4K to 32K. \ipttcd consistently outperforms all baselines and degrades far more gracefully as the context grows.}
\end{figure}

\section{Experiments}
In this section, we conduct extensive experiments to compare our method with various competitive TTT-related architectures and show the advantage of \ipttcd. Specifically, we aim to answer:

\begin{itemize}[leftmargin=0.5cm]
    \item Can IP-TTCD selectively preserve future-predictive past information under limited capacity?
    \item Can IP-TTCD outperform prior TTT approaches when trained from scratch?
    \item Can continual pre-training install the adaptability of \ttcd and enhance pre-trained models?
\end{itemize}
We first train \ipttcd on synthetic tasks (\Cref{subsec:synthetic}) requiring selective compression as a proof of concept. We then compare our method with the baselines on long-context language modeling pre-training tasks (\Cref{subsec:pt}). We also explore the possibility of enabling pre-trained transformer models to acquire the test-time adaptability capability through continual pre-training (\Cref{subsec:ct}). We finally examine the effects of the architecture components. 
\subsection{Synthetic Tasks}
\label{subsec:synthetic}
We first evaluate some synthetic tasks as a proof of concept to show {whether our model can selectively preserve useful information under limited capacity}. Specifically, we compare \ipttcd with other architectures on the MAD benchmark, a suite of synthetic sequence modeling tasks designed to probe whether an architecture can recall, denoise, compress, memorize, and selectively copy information under limited-context or recurrent-state constraints. The results are shown in Table~\eqref{fig:mad}. The MAD benchmark evaluates synthetic long-context abilities including recall, compression, memorization, and selective copying. On MAD, \ipttcd achieves the best overall score and outperforms all baselines, showing clear gains on the most memory-sensitive or fuzzy recall tasks.

\subsection{Pre-training From Scratch}
\label{subsec:pt}
We further evaluate the approach's long context performance when training a model from scratch. We analyze the long-context language modeling capabilities at two scales: 340M and 760M.

\paragraph{Setup} We use Long-Data-Collections \citep{long_data_collection} for pre-training with 32K context length. We compare \ipttcd against various baselines including (1) In-Place TTT with the LM-aligned Conv1D target constructed from input embeddings\cite{feng2026inplace}.\footnote{This setting uses IP-TTT’s original recommended implementation and the LM-aligned formulation. We observe instability in IP-TTT's training when the target is set as the hidden state of the current layer. Using input embedding as a target achieves clearly better performance. } (2) standard Transformer with sliding window attention (SWA) (3) DeltaNet and Gated DeltaNet \citep{schlag2021delta, yang2023gated}. All models are trained on sequences with a 32k sequence length. Please refer to \Cref{app:train-and-eval} for implementation details.

\paragraph{Evaluation} We first evaluate the long-context language modeling performance using \emph{sliding window perplexity} \citep{feng2026inplace} on the validation set, which measures perplexity on a fixed final block of tokens when extending the preceding context. For the 760M model, we further evaluate general downstream capabilities on four commonsense and knowledge benchmarks, including HellaSwag~\citep{zellers2019hellaswag}, ARC-Easy and ARC-Challenge~\citep{clark2018think}, and MMLU~\citep{hendrycks2020measuring}. We also evaluate long-context performance on RULER~\citep{hsieh2024ruler} and BABILong~\citep{kuratov2024babilong}. For RULER, we evaluate on six retrieval tasks across different sequence lengths, including Single and Multi-Key/Query/Value Needle in a Haystack tasks.

\paragraph{Results}
\Cref{fig:ppl_books} shows the sliding-window perplexity of models at different context lengths. 
Across both model scales, \ipttcd consistently achieves the lowest perplexity among all compared architectures, outperforming sliding-window attention, DeltaNet, Gated DeltaNet, and In-Place TTT across nearly the entire context range. 
The advantage becomes especially clear as the context length increases: while several baselines saturate or degrade at longer contexts, \ipttcd continues to benefit from additional context and reaches its best performance at the full 32K length. 

Beyond perplexity, \ipttcd also demonstrates stronger long-context retrieval while maintaining competitive general capability. 
As shown in \Cref{tab:commonsense-niah-babilong-760m}, \ipttcd achieves the best average commonsense score among all compared architectures, suggesting that the proposed adaptation does not compromise short-context reasoning. 
The advantage is especially clear on RULER NIAH, where \ipttcd obtains the highest average accuracy at every context length and more than doubles IPTTT at 32K, improving from 9.29 to 21.96. 
It also consistently outperforms the baselines on BABILong across all evaluated lengths beyond $4k$, confirming that the context-distillation objective better supports retention and reuse of distant information.



\begin{table}[t]
\centering
\small
\setlength{\tabcolsep}{3.8pt}
\begin{tabular}{lrrrrrr@{\hspace{1.2em}}rrrrrr}
\toprule
 & \multicolumn{6}{c}{SmolLM2-360M} & \multicolumn{6}{c}{SmolLM2-1.7B} \\
\cmidrule(lr){2-7}\cmidrule(lr){8-13}
Model
& 4K & 8K & 16K & 32K & 64K & Avg
& 4K & 8K & 16K & 32K & 64K & Avg \\
\midrule
Base
& \textbf{58.13} & 39.75 &  0.00 &  0.00 &  0.00 & 19.58
& \textbf{68.73} & 50.64 &  0.03 &  0.00 &  0.00 & 23.88 \\
Base (Yarn-8x)
& 35.59 &  7.21 &  6.28 &  6.18 &  4.23 & 11.90
& 43.54 & 14.74 &  9.69 &  8.61 &  7.07 & 16.73 \\
\midrule
CPT
& 50.53 & \textbf{42.74} & 39.92 & 27.24 & {10.09} & 34.10
& 63.74 & 57.35 & 54.74 & \textbf{44.18} & 13.38 & 46.68 \\
IP-TTT
& 50.20 & 42.43 & 40.47 & 29.32 & \textbf{10.35} & 34.55
& 67.41 & \textbf{60.13} & {54.89} & 41.53 & 14.44 & 47.68 \\
\rowcolor{lightyellow}
\textbf{IP-TTCD}
& 49.79 & 42.13 & \textbf{40.59} & \textbf{33.55} &  9.79 & \textbf{35.17}
& 67.61 & {59.58} & \textbf{55.50} & 42.96 & \textbf{19.17} & \textbf{48.96} \\
\bottomrule
\end{tabular}

\vspace{0.2cm}
\begin{tabular}{llrrrrrr}
\toprule
Base Model & Method & 4K & 8K & 16K & 32K & 64K & Avg \\
\midrule
\multirow{4}{*}{Llama-3.1-8B}
& Base
& \textbf{94.26} & 90.59 & 81.71 & 19.74 & 0.03 & 57.26 \\

& CPT
& 93.87 & \textbf{92.58} & 87.32 & 65.19 & 25.30 & 72.85 \\

& IP-TTT
& 93.95 & 92.31 & 88.04 & 66.17 & 26.50 & 73.39 \\

\rowcolor{lightyellow}
& \textbf{IP-TTCD}
& 93.72 & 92.32 & \textbf{88.59} & \textbf{66.48} & \textbf{27.65} & \textbf{73.75} \\
\bottomrule
\end{tabular}
\vspace{0.1cm}
\caption{Evaluation results on RULER benchmark \citep{hsieh2024ruler} based on continual pre-trained SmolLM2-360M/1.7B (Full attention), and LLaMA-3.1-8B. The scores are the average accuracy of all the 13 RULER tasks.}
\label{tab:smollm-ruler-clean-codebase-full}
\end{table}
\subsection{Continual Pre-training}
\label{subsec:ct}
Since training LLMs from scratch is highly
resource-intensive, we further show that \ttcd can be applied as a lightweight augmentation to pre-trained transformer-based LLMs. We start with open-sourced base models without intensive context extension training to verify the performance of our method, showing improved performance gain by continual training on long-context datasets.

\textbf{Setup.} We consider SmolLM2-360M/1.7B~\citep{allal2025smollm2} as the main base models, and additionally evaluate on Qwen3-0.6B/1.7B in Appendix~\ref{app:continual-pretraining-details}. For data, we use the 64K-context book subset from ProLong \citep{gao2025train} for long-context extension training. For more practical verification, we try our method on a larger scale model LLaMA-3.1-8B following the same continual learning protocol. We evaluate the long-context performance of the models on the RULER benchmark~\citep{hsieh2024ruler} with context length ranging from 4k to 64k. For baselines, we compare the \ipttcd enhanced model against two baselines (1) full attention transformers (with YaRN length extrapolation) and (2) IP-TTT enhanced full attention transformers.
More details of the settings are included in Appendix~\ref{app:train-and-eval}.

\textbf{Results.}  The results, summarized in \Cref{tab:smollm-ruler-clean-codebase-full} , show that the proposed adaptation methods consistently improve the long-context performance of SmolLM
models over the attention-only tuning baseline. \ipttcd achieves the
best overall average long-context performance and substantially improves performance at 32K in our 360M model and 64K in our 1.7B model.
These results indicate that \ipttcd is particularly effective in extending the usable
context range of the model. Additional experimental results and details can be found in Appendix~\ref{app:train-and-eval} and Appendix.~\ref{app:continual-pretraining-details}.

\subsection{Ablation Studies}
\label{sec:ablation}
We finally conduct the ablation studies on the design choices or hyperparameters of our method. Unless otherwise noted, we conduct the experiments on our 340M pre-training setting on Long Data Collections \citep{long_data_collection}.

\begin{figure}[t!]
\vspace{0.3cm}
    \centering
    \includegraphics[width=\linewidth]{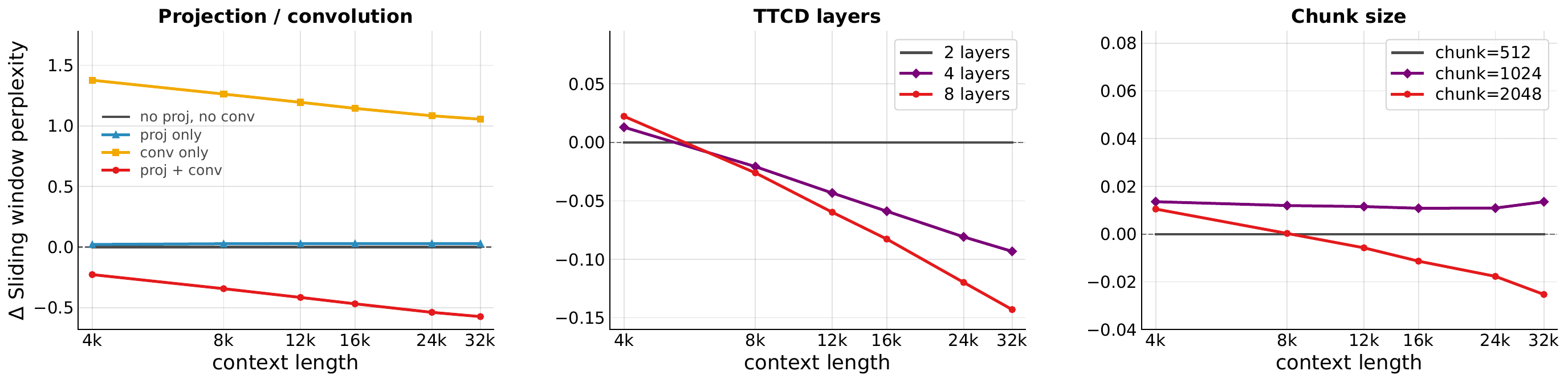}
    \caption{\textbf{Ablation studies on \ipttcd design choices.}
    We evaluate three key factors using 340M pre-training experiments on Long Data Collections.
    \textbf{Left:} ablating the causal convolutional feature maps and learned projection in the fast-weight update shows that combining both components yields the best long-context perplexity.
    \textbf{Middle:} increasing the number of \ipttcd layers consistently improves sliding-window perplexity, indicating that larger fast-weight capacity better supports long-context modeling.
    \textbf{Right:} increasing the chunk size from 512 to 2048 does not hurt performance and even slightly improves it, suggesting that \ipttcd remains effective under large-chunk updates.}
    \label{fig:ablation}
\end{figure}

\begin{figure}[t!]
    \centering
    \includegraphics[width=\linewidth]{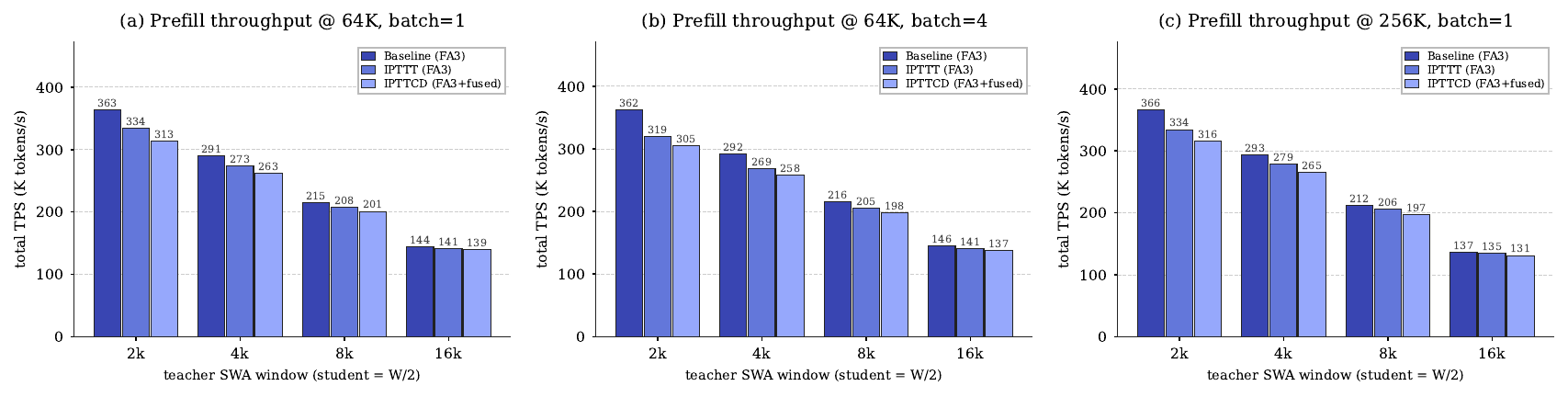}
    \caption{\textbf{Prefill throughput with our fused inference kernels.}
    FlashAttention-3 stack (\ipttcd uses the LSE-combination kernel); mean of four repeats on H100.
    Across all three settings, \ipttcd's overhead relative to IP-TTT is $1.3\%$--$6.6\%$ (\Cref{app:efficiency}).}
    \label{fig:prefill-fa3}
\end{figure}

\paragraph{Convolution and projection} We first investigate the effectiveness of the additional short convolution on the sequence \(\bar{\bZ}_i^{(T)}=\text{conv}_T(\bZ_i^{(T)})\), and the learnable linear projection \(\W_{\mathrm{proj}}\) on the test-time gradient update. The ablation experiments in \Cref{fig:ablation} (Left) show that the performance is the best when both parameters are used. The model with only convolution added in the update has a similar downward tendency in long-context perplexity as the model with both the convolution and the projection, but has much higher perplexity. Linear projection alone does not improve performance much, but it offers better training gain for convolution update. We conjecture that the phenomenon is related to the training dynamics but did not investigate further.

\paragraph{Number of \ipttcd layers} We then consider how the performance scales with the number of \ipttcd layers, which accounts for the state size of the implicit fast weight. \Cref{fig:ablation} (Mid) shows that the sliding window perplexity decreases clearly when the number of layers increases, and the steeper reduction also indicates a better utilization of models with more \ipttcd layers. 

\paragraph{Chunk size} We finally ablate on the chunk size of the update, which controls the efficiency and parallelism level of our method. Surprisingly, we found that within $\{512, 1024, 2048\}$, larger chunk size does not hurt the performance. Instead, the model with chunk size 2048 even has the best performance. This further confirms the scalability of large chunk update mentioned in \citep{zhang2025test, feng2026inplace}.  

\paragraph{Inference efficiency} At inference, \ipttcd computes two sliding-window attentions per \ipttcd layer (teacher and student), which we fuse into dedicated dual-window kernels; fusion speeds up end-to-end prefill by $10\%$--$14\%$ over the unfused implementation. As \Cref{fig:prefill-fa3} shows, the resulting prefill-time gap to IP-TTT is only $+1.3\%$--$+6.6\%$ on the FlashAttention-3 stack~\citep{shah2024flashattention3} ($-1.3\%$--$+3.2\%$ with FlashAttention-2~\citep{dao2023flashattention2}); the remaining cost stems from the student MLP path rather than attention. Kernel designs, full measurements, and numerical-equivalence validation are given in \Cref{app:efficiency}.


\section{Conclusion}
We introduced \ttcd, a test-time training framework that unifies past-context compression and online adaptation through a single context-distillation objective. Rather than asking the fast weights to store all the past information, \ttcd uses the hidden-state discrepancy between a long-window teacher and a short-window student as a dense self-supervised signal, encouraging the limited parametric state to store distant information that is most useful for future predictions. Our in-place instantiation, \ipttcd, realizes this idea as a low-rank correction to the existing MLP down-projection. We validate our method in both pre-training from scratch and continual pre-training, and find that \ipttcd consistently improves long-context 
language modeling, retrieval, and reasoning over strong baselines. By repeatedly distilling what the short-window model misses into fast weights, \ipttcd allows a model to acquire and reuse information as a sequence unfolds. We therefore view TTCD as a step toward architectural continual learning: the model not only adapts at test time, but also learns which parts of its experience should remain available for future predictions.

\paragraph{Limitations.} While \ipttcd improves long-context performance, several 
limitations remain. A naive two-kernel implementation of the teacher--student attention slows down prefill by up to $1.3\times$ at small teacher windows. With the fused inference kernels of \Cref{app:efficiency}, the 64K prefill overhead relative to IP-TTT drops to $-1.3\%$--$+3.2\%$ on the FlashAttention-2 stack and $+1.3\%$--$+6.6\%$ on FlashAttention-3; training remains $1.1\times$--$1.2\times$ slower than standard Transformer baselines. Our from-scratch pre-training experiments are limited to 760M parameters, although our continual-pretraining experiments extend to 8B models. Scaling \ipttcd from scratch to substantially larger models remains open.

\bibliography{neurips_2026}
\newpage
\appendix

\section{Experiment Details}
\subsection{Details of Training And Evaluation}
\label{app:train-and-eval}
\paragraph{Datasets} For the pre-training experiments, we use the open-sourced Long Data Collection \citep{long_data_collection}. Specifically, since the original dataset on huggingface is down, we used a public open-sourced copy~\footnote{\href{https://huggingface.co/datasets/emozilla/Long-Data-Collections-Pretrain-Without-Books}{https://huggingface.co/datasets/emozilla/Long-Data-Collections-Pretrain-Without-Books}}. To make up the missing part and keep the data mixing ratio similar, we find the open-source Books\footnote{https://huggingface.co/datasets/Geralt-Targaryen/books3/tree/main} dataset and mix it with the long-data-collection dataset with ratio 2:8. For evaluation, we also used another leave-out subset of the open-sourced Books dataset.
For continual pre-training experiments, we used the tokenized 64K book subset from ProLong \citep{gao2025train}.

\textbf{Training Details} All models are trained on NVIDIA H100 GPUs, detailed hyperparameters are listed below. For pre-training we apply warmup-stable-decay (WSD) learning rate schedule \citep{hu2024minicpm} which couples better with Muon \citep{wen2025fantastic}. For pre-training, the teacher convolution is zero initialized, and student convolution is random initialized. For continual pre-training, we use a cosine learning rate decay to 0.1 times the peak learning rate after warm-up steps.
\begin{table}[h]
\centering
\label{tab:pretraining-hyperparameters}
\begin{tabular}{lccc}
\toprule
\textbf{Hyperparameter} & \textbf{340M Model} & \textbf{760M Model} & \textbf{Continual Training}\\
\midrule
Optimizer & Muon & Muon & AdamW \\
Learning Rate & $3 \times 10^{-3}$ & $3 \times 10^{-3}$ &$1\times 10^{-5}$\\
Batch Size & 500K tokens & 1M tokens & 1M tokens\\
Weight Decay & 0.1 & 0.1 & 0.1\\
Gradient Clipping & 1.0 & 1.0 & 1.0\\
Warmup Steps & 500 & 500 & 200\\
Sequence Length & 32768 & 32768 & 65536\\
Tokens Trained & 20B & 40B & 10B\\
Sliding Window Size & 2048 & 2048 & Full (8192 for SWA)\\
Student Window Size & 1024 & 1024 & 2048\\
Hidden State Size & 1024 & 1536 & See base model\\
Number of TTT layers & 4& 4& 4\\
TTT learning rate & 0.3 & 0.3 & 0.3\\
\bottomrule
\end{tabular}
\vspace{0.1cm}
\caption{Pre-training hyperparameters for 340M and 760M models, and continual pre-training. *For continual pre-training, we remove the normalization on $\bZ_i^{(S)}$ for stability.}
\end{table}

\paragraph{Evaluation Details} We use the official lm-evaluation-harness \citep{biderman2024lessons} for all commonsense reasoning, BABILong, and RULER benchmarks. 
For commonsense reasoning, we evaluate on HellaSwag, ARC-Easy, ARC-Challenge, and MMLU, and report accuracy following the default evaluation protocol of lm-evaluation-harness. 
For RULER, we evaluate long-context retrieval performance at different context lengths and report average accuracy over the corresponding Needle-in-a-Haystack tasks. 
For BABILong, we report the average accuracy over the evaluated QA tasks at each sequence length. 
For sliding-window perplexity, we follow the evaluation protocol of \citet{feng2026inplace}: given a fixed final block of tokens, we progressively increase the amount of preceding context and compute the perplexity only on the final block. 
For the MAD benchmark, we follow the standard MAD evaluation protocol and report the best test accuracy after sweeping learning rates and weight decay for each task configuration.

\subsubsection{Synthetic MAD Benchmark Setup} 
\label{app:mad-setup}

\paragraph{Model architecture.}
All models follow the same two-layer architecture to ensure a fair comparison. Each layer consists of a sequence mixing sublayer followed by a channel mixing sublayer (SwiGLU MLP), wrapped in a Pre-LN residual block with RMSNorm. The hidden dimension is $d=128$ throughout. The SwiGLU MLP uses an intermediate dimension of $\frac{8d}{3}$ rounded to the nearest multiple of 256, resulting in $d_{\mathrm{ff}}=256$. All models use 4 attention heads with head dimension 32.

\paragraph{Sliding window configuration.}
Since several MAD tasks use short sequences (e.g., Compression and Memorization use $L \leq 256$ and $L = 32$, respectively), a single window size does not yield a meaningful comparison across all tasks. For IP-TTCD, IP-TTT and LaCT, we use task-dependent window sizes:
\begin{itemize}
    \item \textbf{ICR, Noisy-ICR, Fuzzy-ICR, Selective-Copying}: Teacher window size $W=32$, student/TTT chunk size $C=16$, student visible window $= 1 \times C = 16$.
    \item \textbf{Compression, Memorization}: Teacher window size $W=16$, student/TTT chunk size $C=8$, student visible window $= 1 \times C = 8$.
\end{itemize}
GDN and DeltaNet are recurrent models and do not use a sliding window.

\paragraph{Training.}
All models are trained with AdamW and a cosine learning rate schedule (minimum LR $10^{-6}$) for 200 epochs with batch size 128 in BF16 mixed precision. We sweep over learning rates $\{10^{-4}, 5 \times 10^{-4}, 10^{-3}\}$ and weight decays $\{0, 0.1\}$, yielding 6 hyperparameter configurations per task setting. We report the best test accuracy across the hyperparameter sweep for each (task, vocab\_size, seq\_len) configuration, following the standard MAD protocol. The total benchmark comprises 396 configurations across 6 tasks. All experiments use seed 12345.
\subsection{Additional Results}

\subsubsection{Pre-training}

\paragraph{Per-task RULER NIAH breakdown.}
\Cref{tab:ruler-niah-detail-760m} reports per-subtask accuracy for the 760M 
models from \Cref{subsec:pt}. \ipttcd attains the highest average at every 
context length, and the gap to the baselines widens with length: at 32K, 
\ipttcd reaches 21.96, while IP-TTT-Embedding drops to 9.29 and all 
recurrent baselines stay below 20. The advantage is most pronounced on 
the harder retrieval tasks (S2, S3, MK1, MV, MQ), which require 
disambiguating distractors or composing multiple keys or values. SWA 
degrades sharply beyond its 2K training window, as expected.
\begin{table}[h]
\centering
\small
\setlength{\tabcolsep}{5.2pt}
\resizebox{\linewidth}{!}{
\begin{tabular}{llrrrrrrr}
\toprule
\textbf{Model} & \textbf{Context}
& \textbf{S1} & \textbf{S2} & \textbf{S3}
& \textbf{MK1} & \textbf{MV} & \textbf{MQ} & \textbf{Avg} \\
\midrule
\multirow{4}{*}{IPTTCD}
& 4K  & 48.00 & \textbf{100.00} & \textbf{91.00} & \textbf{71.00} & \textbf{43.50} & \textbf{41.25} & \textbf{65.79} \\
& 8K  & 23.00 & \textbf{90.00}  & \textbf{75.00} & 41.00 & \textbf{27.00} & \textbf{25.50} & \textbf{46.92} \\
& 16K & 14.00 & \textbf{70.00}  & \textbf{74.00} & 42.00 & \textbf{25.00} & \textbf{25.50} & \textbf{41.75} \\
& 32K & 1.00  & \textbf{24.00}  & \textbf{29.00} & \textbf{33.00} & \textbf{21.50} & \textbf{23.25} & \textbf{21.96} \\
\midrule
\multirow{4}{*}{In-Place TTT}
& 4K  & 96.00 & 68.00 & 69.00 & 70.00 & 30.25 & 25.75 & 59.83 \\
& 8K  & 51.00 & 32.00 & 27.00 & \textbf{52.00} & 25.25 & 16.00 & 33.88 \\
& 16K & 23.00 & 23.00 & 10.00 & \textbf{45.00} & 24.25 & 18.25 & 23.92 \\
& 32K & 0.00  & 6.00  & 7.00  & 21.00 & 15.00 & 6.75  & 9.29 \\
\midrule
\multirow{4}{*}{DeltaNet}
& 4K  & \textbf{100.00} & 72.00 & 6.00  & 40.00 & 14.75 & 17.25 & 41.67 \\
& 8K  & \textbf{100.00} & 47.00 & 15.00 & 36.00 & 15.00 & 16.50 & 38.25 \\
& 16K & \textbf{100.00} & 10.00 & 3.00  & 21.00 & 6.75  & 8.25  & 24.83 \\
& 32K & \textbf{100.00} & 2.00  & 3.00  & 6.00  & 1.25  & 3.75  & 19.33 \\
\midrule
\multirow{4}{*}{Gated DeltaNet}
& 4K  & \textbf{100.00} & 60.00 & 3.00 & 17.00 & 6.00 & 7.25 & 32.21 \\
& 8K  & \textbf{100.00} & 51.00 & 2.00 & 12.00 & 5.00 & 9.25 & 29.88 \\
& 16K & \textbf{100.00} & 9.00  & 1.00 & 12.00 & 8.50 & 3.25 & 22.29 \\
& 32K & 98.00 & 2.00 & 2.00 & 4.00 & 2.75 & 3.00 & 18.62 \\
\midrule
\multirow{4}{*}{SWA}
& 4K  & 48.00 & 51.00 & 49.00 & 57.00 & 32.00 & 32.25 & 44.88 \\
& 8K  & 23.00 & 24.00 & 24.00 & 27.00 & 19.00 & 18.50 & 22.58 \\
& 16K & 11.00 & 17.00 & 6.00  & 10.00 & 10.50 & 10.25 & 10.79 \\
& 32K & 1.00  & 6.00  & 3.00  & 5.00  & 5.25  & 5.00  & 4.21 \\
\bottomrule
\end{tabular}
}
\vspace{0.2cm}
\caption{Detailed RULER synthetic NIAH results for 760M models. S1, S2, and S3 denote the three single-needle retrieval tasks; MK1 denotes multikey retrieval; MV denotes multivalue retrieval; MQ denotes multiquery retrieval. Scores are averaged over 100 examples per task and reported as percentages. Avg is the mean over the six NIAH subtasks at each context length. Higher is better.}
\label{tab:ruler-niah-detail-760m}
\end{table}

\subsubsection{Continual Pre-training}
\label{app:continual-pretraining-details}

\paragraph{Base models.}
We use SmolLM2-360M and SmolLM2-1.7B~\citep{allal2025smollm2} as base 
Transformer models. Both are originally trained with an 8K context window. 
We compare the original base model, YaRN-based length extrapolation, 
attention-only continual pre-training (CPT), IP-TTT~\citep{feng2026inplace}, 
and our IP-TTCD augmentation. We additionally evaluate continual pre-training 
on Qwen3-0.6B and Qwen3-1.7B, whose base models already support long context 
natively, as a complementary stress test of our method.

\paragraph{Training setup.}
For continual pre-training, we use the tokenized 64K book subset of 
ProLong~\citep{gao2025train}. All models are trained with sequence length 
64K for 10B tokens. We use AdamW with learning rate $1\times 10^{-5}$, batch 
size 1M tokens, weight decay 0.1, gradient clipping 1.0, and 200 warmup steps, 
followed by a cosine decay schedule to 0.1 times the peak learning rate. For 
IP-TTT and IP-TTCD, we insert 4 TTT layers and use a TTT learning rate of 0.3. The teacher shortconv is initialized as identity, while the student convolution is initialized to zero in order to keep the model behavior the same before training.

\paragraph{Evaluation setup.}
We evaluate long-context performance on RULER~\citep{hsieh2024ruler} with 
context lengths 4K, 8K, 16K, 32K, and 64K. For Qwen3, whose native context length is 32K, we report results up to 32K. We report the average accuracy 
over all 13 RULER tasks. All evaluations are conducted with the official 
lm-evaluation-harness implementation~\citep{biderman2024lessons}.

\paragraph{Additional results.}
In addition to the averaged RULER results reported in the main text, we 
provide the full breakdown for SmolLM2 in \Cref{tab:smollm-ruler-full-app}, 
covering both full attention and sliding window attention (SWA) inference 
settings. We further report results for Qwen3 base models in 
\Cref{tab:qwen-ruler-app}. Together, these results show that IP-TTCD extends 
the usable context range of base Transformer models, especially at longer 
context lengths where the original models and simple length-extrapolation 
baselines degrade sharply, and that the gains transfer across base models 
with very different long-context starting points.

\paragraph{IP-TTCD recovers long-context performance under full attention.}
Under full-attention inference, \ipttcd achieves the highest RULER average on 
both SmolLM2 scales, with the most visible gains at the longest contexts 
(e.g.\ 33.55 vs.\ 27.24 / 29.32 at 32K on 360M, and 19.17 vs.\ 13.38 / 14.44 
at 64K on 1.7B), suggesting that the long-window distillation signal is most 
useful exactly where short-window supervision becomes least informative. 
Under SWA inference, where the long-range attention path is removed at test 
time, the three continual pre-training variants (CPT, IP-TTT, \ipttcd) 
perform within roughly one RULER point of each other, which is consistent 
with the design of \ipttcd: the distillation signal is informative when the 
long-window path is available, and benign when it is not.

\paragraph{Robustness across different base models.}
We also evaluate \ipttcd on Qwen3-0.6B and Qwen3-1.7B, which already have 
strong long-context capability out of the box. The pre-training data of 
Qwen3 is known to be of high quality, and continual pre-training on a new 
long-context corpus slightly degrades Qwen3-1.7B's performance. Within this 
regime, \ipttcd retains the highest average among continual pre-training 
variants on both Qwen3-0.6B and Qwen3-1.7B, indicating that it behaves 
consistently across base models with different long-context starting 
points.

\subsubsection{Additional Experiments: Scaling up to 8B Models}

To extend our results to a larger scale, we start from LLaMA-3.1-8B and conduct the additional continual pre-training experiments.

\begin{table}[h]
    \centering
    \label{tab:llama-ct-architecture}
    \small
    \setlength{\tabcolsep}{5pt}
    \begin{tabular}{@{}p{0.38\linewidth}p{0.54\linewidth}@{}}
        \toprule
        Hyperparameter & Value \\
        \midrule
        Base model & LLaMA-3.1-8B \\
        CPT architecture & IP-TTT and IP-TTCD \\
        Transformer layers & 32 \\
        Hidden size & 4096 \\
        MLP intermediate size & 14336 \\
        Context length & 65,536 \\
        RoPE base $\theta$ & 500,000 \\
        TTT layers & $\{0, 8, 16, 24\}$ \\
        TTT chunk size & 2048 \\
        TTT learning rate & 0.3 \\
        TTT target & Hidden state \\
        Tokens per optimizer step & 2,097,152 \\
        Learning-rate schedule & Cosine decay \\
        Peak / final learning rate & $5\times10^{-6}$ / 0 \\
        Warmup & 200 optimizer steps \\
        Target training budget & 10,000,000,000 tokens \\
        \bottomrule
    \end{tabular}
    \vspace{0.3cm}
    \caption{Architecture and IP-TTT configuration of the 64K continual-pre-training run.}
\end{table}

\textbf{Training setup.}
We use 10B tokens in the tokenized 
ProLong~\citep{gao2025train} dataset with 64K context length, but \textbf{use all subsets} as a more practical settings. The training tokens are mixed based on the original datasets' (token-count) ratio. All models are trained with sequence length 
64K for 10B tokens. We use AdamW with learning rate $5\times 10^{-6}$, batch 
size 1M tokens, weight decay 0.1, gradient clipping 1.0, and 200 warmup steps, 
followed by a cosine decay schedule to 0.1 times the peak learning rate. For 
IP-TTT and IP-TTCD, we insert 4 TTT layers. The training details are included in the \Cref{tab:llama-ct-architecture} below.

\begin{table}[h]
\centering
\small
\setlength{\tabcolsep}{4pt}
\begin{tabular}{llrrrrrrr}
\toprule
\textbf{Model} & \textbf{Attn.} & \textbf{4K} & \textbf{8K} & \textbf{16K} & \textbf{32K} & \textbf{64K} & \textbf{Avg$_{\text{4-32K}}$} & \textbf{Avg$_{\text{all}}$} \\
\midrule
\multicolumn{9}{c}{\textit{SmolLM2-360M}} \\
\midrule
Base            & ---  & 58.13 & 39.75 &  0.00 &  0.00 &  0.00 & 24.47 & 19.58 \\
Base + YaRN-8x  & ---  & 35.59 &  7.21 &  6.28 &  6.18 &  4.23 & 13.82 & 11.90 \\
\midrule
CPT             & Full & \textbf{50.53} & \textbf{42.74} & 39.92 & 27.24 & 10.09 & 40.11 & 34.10 \\
IP-TTT          & Full & 50.20 & 42.43 & 40.47 & 29.32 & \textbf{10.35} & 40.61 & 34.55 \\
\rowcolor{lightyellow}
\textbf{IP-TTCD}& Full & 49.79 & 42.13 & \textbf{40.59} & \textbf{33.55} &  9.79 & \textbf{41.52} & \textbf{35.17} \\
\midrule
CPT             & SWA  & 56.33 & 38.36 & 18.23 & 12.42 & \textbf{8.16} & 31.34 & 26.70 \\
IP-TTT          & SWA  & \textbf{58.40} & \textbf{38.83} & 18.26 & 12.90 &  8.10 & \textbf{32.10} & \textbf{27.30} \\
\rowcolor{lightyellow}
\textbf{IP-TTCD}& SWA  & 57.41 & 38.35 & \textbf{19.55} & \textbf{13.00} &  7.21 & 32.08 & 27.10 \\
\midrule
\multicolumn{9}{c}{\textit{SmolLM2-1.7B}} \\
\midrule
Base            & ---  & 68.73 & 50.64 &  0.03 &  0.00 &  0.00 & 29.85 & 23.88 \\
Base + YaRN-8x  & ---  & 43.54 & 14.74 &  9.69 &  8.61 &  7.07 & 19.14 & 16.73 \\
\midrule
CPT             & Full & 63.74 & 57.35 & 54.74 & \textbf{44.18} & 13.38 & 55.00 & 46.68 \\
IP-TTT          & Full & 67.41 & 60.13 & {54.89} & 41.53 & 14.44 & 55.99 & 47.68 \\
\rowcolor{lightyellow}
\textbf{IP-TTCD}& Full & \textbf{67.61} & \textbf{59.58} & 55.50 & 42.96 & \textbf{19.17} & \textbf{56.41} & \textbf{48.96} \\
\midrule
CPT             & SWA  & 68.58 & 47.85 & \textbf{25.22} & \textbf{13.44} & \textbf{8.66} & 38.77 & \textbf{32.75} \\
IP-TTT          & SWA  & \textbf{69.84} & \textbf{48.39} & 25.00 & 13.18 &  7.36 & \textbf{39.10} & \textbf{32.75} \\
\rowcolor{lightyellow}
\textbf{IP-TTCD}& SWA  & 69.29 & 46.53 & 24.67 & 13.32 &  8.16 & 38.45 & 32.39 \\
\bottomrule
\end{tabular}
\vspace{0.2cm}
\caption{Complete RULER evaluation of continual pre-training on SmolLM2-360M 
and SmolLM2-1.7B, under both full attention and sliding window attention (SWA) 
inference settings. \textit{Base} and \textit{Base + YaRN-8x} are evaluated 
without continual pre-training and serve as references. 
Avg$_{\text{4-32K}}$ averages over 4K--32K, and Avg$_{\text{all}}$ averages 
over all five lengths. \textbf{Bold} marks the best result within each 
\{model size, attention setting\} block.}
\label{tab:smollm-ruler-full-app}
\end{table}
\begin{table}[h]
\centering
\small
\setlength{\tabcolsep}{6pt}
\begin{tabular}{llrrrrr}
\toprule
\textbf{Model} & \textbf{Method} & \textbf{4K} & \textbf{8K} & \textbf{16K} & \textbf{32K} & \textbf{Avg} \\
\midrule
\multirow{4}{*}{Qwen3-0.6B}
& Raw              & 80.98 & 72.23 & 65.91 & 59.69 & 69.70 \\
\cmidrule(lr){2-7}
& CPT              & 81.25 & 71.92 & 66.98 & 60.76 & 70.23 \\
& IP-TTT           & 80.44 & \textbf{72.69} & \textbf{68.03} & 59.60 & 70.19 \\
\rowcolor{lightyellow}
& \textbf{IP-TTCD} & \textbf{82.27} & 72.52 & 66.97 & \textbf{60.89} & \textbf{70.66} \\
\midrule
\multirow{4}{*}{Qwen3-1.7B}
& Raw              & 89.17 & 83.63 & 78.27 & 70.32 & 80.35 \\
\cmidrule(lr){2-7}
& CPT              & 87.21 & 81.47 & 74.87 & 67.69 & 77.81 \\
& IP-TTT           & \textbf{87.35} & 81.51 & \textbf{75.60} & 66.84 & 77.83 \\
\rowcolor{lightyellow}
& \textbf{IP-TTCD} & 87.20 & \textbf{82.65} & 74.56 & \textbf{67.94} & \textbf{78.09} \\
\bottomrule
\end{tabular}
\vspace{0.2cm}
\caption{RULER evaluation of continual pre-training on Qwen3-0.6B and 
Qwen3-1.7B. \textit{Raw} denotes the original base model without continual 
pre-training, which already supports long context natively. The remaining 
rows are continual pre-trained on the same long-context corpus as the 
SmolLM2 experiments. \textbf{Bold} marks the best result among continual 
pre-trained variants within each model size.}
\label{tab:qwen-ruler-app}
\end{table}

\section{Inference Efficiency: Fused Dual-Window Attention}
\label{app:efficiency}

This appendix details the inference kernels summarized in \Cref{sec:ablation}, together with complete measurements and the numerical-equivalence validation.

\paragraph{Why prefill costs extra.}
During prefill, every \ipttcd layer runs two sliding-window attentions over shared $Q,K,V$---the $\lwindow$-window teacher and the $\swindow$-window student ($\swindow=\lwindow/2$ in all our recipes)---followed by the chunk-wise fast-weight scan in the MLP. A naive implementation issues two FlashAttention calls per layer and recomputes the student's attention from scratch, even though the student's key range is a strict subset of the teacher's. The extra work (student attention, student MLP activations, short convolutions, $\Delta\W$ accumulation) is linear in sequence length and independent of $\lwindow$, while the baseline attention cost scales with $\lwindow$, so the \emph{relative} overhead grows as the teacher window shrinks: with off-the-shelf kernels we measure $+18\%$ prefill time versus IP-TTT at $\lwindow=2048$ and $+13\%$ at $\lwindow=8192$ under the $\swindow=\lwindow/2$ recipe (64K context, H100), and up to $+28\%$ with the larger student windows of earlier configurations---the origin of the ``up to $1.3\times$'' slowdown quoted in the limitations.

\paragraph{Fused dual-window attention.}
The student's window is contained in the teacher's, so every score the student needs is already computed during the teacher's pass. Our kernel, implemented in Triton, maintains two sets of online-softmax accumulators (output, running maximum, normalizer) per query block and sweeps the teacher's KV range once, backward from the causal diagonal. Within the shared range the student's $PV$ product is recovered from the teacher's by the row-wise rescaling identity $P_S = P_T\, e^{m_T-m_S}$, so keys and values are loaded once, $QK^\top$ is computed once, and the second output is produced with a handful of extra vector operations. The loop is split into five segments (causal diagonal, shared mask-free interior, student boundary, teacher-only interior, teacher boundary) so the bulk of the sweep runs without any masking; numerics follow the CUTLASS FlashAttention-2 implementation step by step (base-2 exponentials, the same guard for fully-masked rows, normalization by reciprocal multiplication), yielding outputs that are bit-identical to FlashAttention-2 on $99.7$--$99.9\%$ of elements and within 1--2 units in the last place (ulps) in bf16 elsewhere. We additionally replace the depthwise short convolutions of the TTT layers with a dedicated kernel that operates directly on the chunked layout; it is bitwise-equal to the cuDNN path and $11.6\times$ faster once the layout copies are included.

Interestingly, this kernel is faster than FlashAttention-2 \emph{even while doing strictly more work}: FlashAttention-2's local-attention path applies the sliding-window mask on every KV block of the inner loop, whereas ours masks only diagonal and boundary blocks. As \Cref{tab:kernel-micro} shows, the fused kernel produces both windowed outputs in less time than a single FlashAttention-2 SWA call, at every batch size.

\paragraph{LSE-combination variant.}
When maintaining a custom kernel is undesirable, the same saving is available from stock kernels: attention over disjoint key sets can be merged exactly through the log-sum-exp that FlashAttention already tracks, $o = e^{L_1-L}o_1 + e^{L_2-L}o_2$ with $L=\log(e^{L_1}+e^{L_2})$. We obtain the student output from a $\swindow$-window call and the teacher output by merging it with a shifted ``far-band'' call over the teacher-only keys beyond the student window, $\mathrm{Attn}(q_{\,\swindow:},\,k_{:-\swindow},\,v_{:-\swindow})$ with window $\lwindow-\swindow$, using a one-pass merge kernel. The student output is bitwise identical to the stock kernel by construction. On FlashAttention-3, whose SWA path does not suffer from the per-block masking issue, this variant computes both outputs in roughly $30\%$ less time than two FA3 calls (\Cref{tab:kernel-micro}).

\begin{table}[h]
\centering
\small
\setlength{\tabcolsep}{6pt}
\begin{tabular}{crrrrrr}
\toprule
\textbf{Batch} & \textbf{FA2 $1\times$} & \textbf{FA2 $2\times$} & \textbf{FA3 $1\times$} & \textbf{FA3 $2\times$} & \textbf{Fused (ours)} & \textbf{LSE (ours, FA3)} \\
\midrule
1 & 13.2 & 20.3 & 5.7 & 9.9 & 11.3 & 6.6 \\
4 & 56.1 & 84.4 & 24.8 & 40.5 & 45.0 & 30.7 \\
\bottomrule
\end{tabular}
\vspace{0.15cm}
\caption{Attention-only prefill time (ms) at 64K context with $\lwindow=8192$, $\swindow=4096$ (Qwen3-0.6B shape: 16 query / 8 KV heads, head dimension 128, bf16, H100). ``$1\times$'' is a single teacher-window call; ``$2\times$'' is the naive teacher-plus-student pair. Our fused kernel produces \emph{both} outputs faster than a single FlashAttention-2 call; the LSE variant is the fastest option on the FlashAttention-3 stack. Median over 10 timed iterations after warmup.}
\label{tab:kernel-micro}
\end{table}

\paragraph{End-to-end prefill throughput.}
\Cref{fig:prefill-fa3} in the main text and \Cref{fig:prefill-fa2} report end-to-end prefill throughput of the continually pre-trained Qwen3-0.6B configuration (28 layers, 4 TTT layers) against the SWA Transformer and IP-TTT baselines, at 64K/batch~1, 64K/batch~4, and 256K/batch~1, sweeping the teacher window with $\swindow=\lwindow/2$. Every cell is the mean of four independent repeats with all three models measured back-to-back on the same GPU. On the FlashAttention-2 stack (all models use FA2 for their standard attention; \ipttcd uses the fused dual-window and convolution kernels on its TTT layers), \ipttcd is within $-1.3\%$ to $+3.2\%$ of IP-TTT---at the largest window (16K) it is slightly \emph{faster}, because the fused kernel more than repays the dual-window cost. Relative to the naive two-kernel implementation, the fused kernels shorten end-to-end prefill by $10\%$--$13\%$ across windows ($11\%$--$14\%$ higher throughput). On the FlashAttention-3 stack (all models on FA3; \ipttcd uses the LSE variant plus the convolution kernel), throughput of every model rises by $25\%$--$90\%$ (more at larger windows) and the residual \ipttcd overhead is $+1.3\%$ to $+6.6\%$; at this point attention is no longer the bottleneck and the remaining cost is the window-independent student MLP path (student activations, convolutions, and the $\Delta\W$ scan). A CUTLASS-level port of the dual-window design to FlashAttention-3 would recover most of this remainder; we leave it to future work.

\begin{figure}[h]
    \centering
    \includegraphics[width=\linewidth]{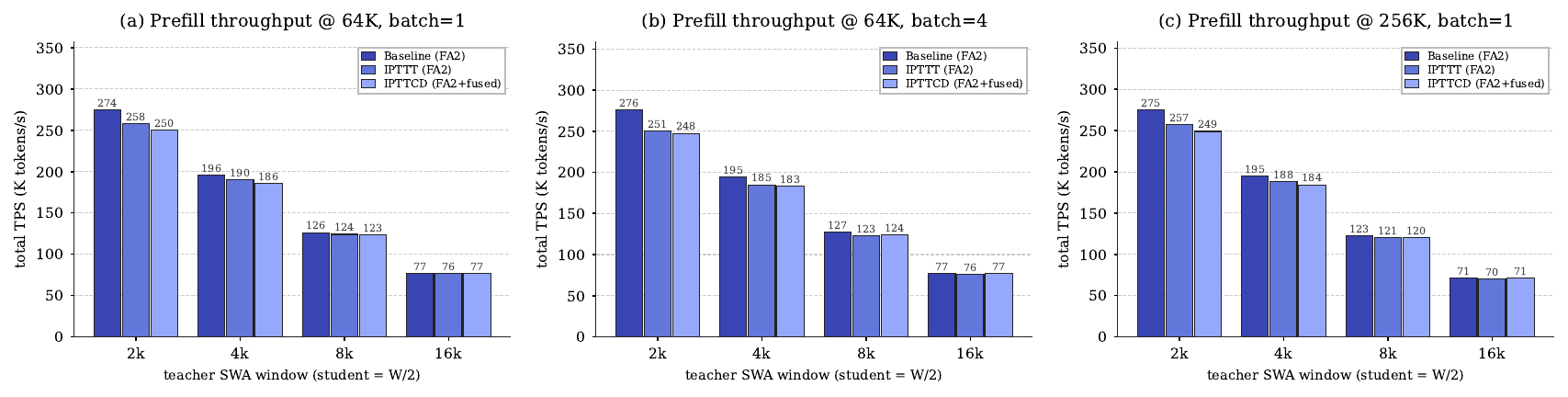}
    \caption{\textbf{End-to-end prefill throughput on the FlashAttention-2 stack.} Mean of four repeats on H100; \ipttcd uses the fused dual-window and convolution kernels on its TTT layers. Across all settings the gap to IP-TTT is $-1.3\%$ to $+3.2\%$.}
    \label{fig:prefill-fa2}
\end{figure}

\paragraph{Numerical equivalence.}
Because \ipttcd's fast-weight recursion can amplify kernel-level rounding differences, we validate the kernels at three levels on the continually pre-trained checkpoint. (i) \emph{Kernel level}: as reported above, the fused kernel matches FlashAttention-2 bit-for-bit on $99.7$--$99.9\%$ of output elements; the convolution kernel and the LSE student path are exactly bitwise. (ii) \emph{Distribution level}: over 10M tokens of held-out book text at 64K, the per-token KL divergence between the two implementations' next-token distributions has median $7.8\times10^{-4}$ nats (p99 $=4.0\times10^{-3}$), total-variation distance has median $1.25\%$ (p99 $=3.8\%$), and greedy top-1 decisions flip on $1.76\%$ of positions. (iii) \emph{Task level}: RULER NIAH accuracy is identical in all 8 evaluated (task, context) cells (four NIAH tasks at 32K and 64K). We caution that for such recursive architectures, kernel changes should be signed off with task metrics rather than logit comparisons.

\section{Theoretical Analysis}
\subsection{Proof of \Cref{thm:main}}
We first restate the proposition.
\begin{proposition}[Context distillation transfers an in-context solution]
\label{appen_thm:main}
    Under the setup and assumptions above, for any learning rate $\eta>0$, the update induced by preceding chunks satisfies
\begin{equation}
\begin{aligned}
    {y}_{\querypos}^{(T)\top}\Delta {y}_{\newquerypos}^{(S)}
    \ge
    \eta\,\hgap\,\zalign.
\end{aligned}
\end{equation}
Equivalently, the chunk update moves the later student hidden state at $q_2$ toward the teacher hidden state that already encodes the in-context solution for $q_1$.
\end{proposition}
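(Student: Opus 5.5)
The argument is a direct computation from the update rule \eqref{eq:theory-update} and the readout \eqref{eq:theory-student-plus}; no earlier result beyond these definitions is needed. First, by \eqref{eq:theory-student-plus} the change in the student output at the later query is
\[
\Delta y^{(S)}_{\newquerypos} = \Delta \W_{<j}\, z^{(S)}_{\newquerypos},
\]
where $j$ is the chunk containing $\newquerypos$. Substituting \eqref{eq:theory-update} gives
\[
\Delta y^{(S)}_{\newquerypos} = \eta \sum_{t\in\mathcal{I}_{<j}} \bigl(y^{(T)}_t - y^{(S)}_t\bigr)\, \bigl(z^{(S)\top}_t z^{(S)}_{\newquerypos}\bigr).
\]
Here the scalar inner product has been pulled out of each rank-one term.

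Next I use the second bullet of \Cref{assump} to collapse the sum. Since $\querypos\in\mathcal{I}_{<j}$, the term $t=\querypos$ is present. For every other $t\in\mathcal{I}_{<j}$ the scalar $z^{(S)\top}_t z^{(S)}_{\newquerypos}$ vanishes, so
\[
\Delta y^{(S)}_{\newquerypos} = \eta\,\bigl(z^{(S)\top}_{\querypos} z^{(S)}_{\newquerypos}\bigr)\bigl(y^{(T)}_{\querypos} - y^{(S)}_{\querypos}\bigr).
\]
Taking the inner product with $y^{(T)}_{\querypos}$ then yields
\[
y^{(T)\top}_{\querypos}\Delta y^{(S)}_{\newquerypos} = \eta\,\bigl(z^{(S)\top}_{\querypos} z^{(S)}_{\newquerypos}\bigr)\, y^{(T)\top}_{\querypos}\bigl(y^{(T)}_{\querypos}-y^{(S)}_{\querypos}\bigr).
\]

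Finally I bound the two scalar factors. The first bullet of \Cref{assump} gives $y^{(T)\top}_{\querypos}\bigl(y^{(T)}_{\querypos}-y^{(S)}_{\querypos}\bigr)\ge \hgap>0$, and the second gives $z^{(S)\top}_{\querypos}z^{(S)}_{\newquerypos}\ge\zalign>0$. Both factors are nonnegative and $\eta>0$, so their product is at least $\eta\,\hgap\,\zalign$, which is the claimed inequality. The window condition $\querypos-\lwindow<\contextstart\le\contextend\le\querypos-\swindow$ plays no role in the algebra; it only motivates why $\hgap$ can be positive.

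The only delicate point is bookkeeping rather than a genuine obstacle. The update \eqref{eq:theory-update} must be taken with all activations $y^{(T)}_t, y^{(S)}_t, z^{(S)}_t$ frozen at their pre-update values. This is the off-policy convention of \Cref{subsec:practical_implementation}, and it is implicit in how \eqref{eq:theory-update} is stated. Under this convention $\Delta\W_{<j}$ is exactly linear in the residuals and no cross-chunk feedback arises. I would state this explicitly at the start of the proof, so that the sum is a fixed finite sum of rank-one terms.
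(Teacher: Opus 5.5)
Your proof is correct and matches the paper's argument: substitute the accumulated update into $\Delta y^{(S)}_{\newquerypos} = \Delta \W_{<j} z^{(S)}_{\newquerypos}$, use the orthogonality part of the second assumption to reduce the sum to the single $t=\querypos$ term, and multiply the two lower bounds $\hgap$ and $\zalign$. Your closing remark about freezing activations is harmless but unnecessary. The paper takes the accumulated update as the definition of $\Delta \W_{<j}$, and the assumptions are stated about the same activations that appear in that definition.
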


\emph{Interpretation.} The vector $\bm{Y}_{\querypos}^{(T)}$ is the representation that solves $q_1$ using the long context. Equation~\eqref{eq:theory-main} shows that the distillation update stores the teacher-student residual from $q_1$ in the fast weights and replays it at a later aligned query $q_2$. Thus, the fast weights can carry over an in-context learning result even after the original supporting context is no longer available to the short-window student.

\begin{proof}
By Equations~\eqref{eq:theory-update} and~\eqref{eq:theory-student-plus},
\begin{align*}
     y_{\querypos}^{(T)\top}\Delta y_{\newquerypos}^{(S)}
    &= y_{\querypos}^{(T)\top} \Delta \W_{<j} z^{(S)}_{\newquerypos} \\
    &= \eta\, y_{\querypos}^{(T)\top} \sum_{t \in \mathcal{I}_{<j}} \bigl(y_t^{(T)}-y_t^{(S)}\bigr)z_t^{(S)\top}z^{(S)}_{\newquerypos}\\
    &= \eta\, y_{\querypos}^{(T)\top}\bigl(y_{\querypos}^{(T)}-y_{\querypos}^{(S)}\bigr)z_{\querypos}^{(S)\top}z^{(S)}_{\newquerypos},
\end{align*}
where the last equality follows from the assumptions. Applying the positive teacher-student gap and the related-query alignment gives
\[
    y_{\querypos}^{(T)\top}\Delta y_{\newquerypos}^{(S)}
    \ge
    \eta\,\hgap\,\zalign,
\]
which proves Equation~\eqref{eq:theory-main}.
\end{proof}




\end{document}